\newtheorem{assumption}{Assumption}
\newtheorem{lem}{Lemma} \newtheorem{thm}{Theorem}
\newtheorem{defn}{Definition}
\def\mc{\mathcal}
\def\mb{\mathbf}
\def\mbb{\mathbb}
\def\ra{\rightarrow}
\def\bmu{\boldsymbol{\mu}}
\def\bnu{\boldsymbol{\nu}}
\renewcommand{\@fnsymbol}[1]{}  
\title{Multi-robot Path Planning and Scheduling via\\ Model Predictive Optimal Transport (MPC-OT)}
\author{%
Usman A. Khan$^{1,2,\dagger,\ddagger}$\thanks{$^\dagger$UAK holds concurrent appointments as a Professor of CS at Boston College and as an Amazon Scholar. This paper describes work performed at Amazon and is not associated with Boston College.}
\thanks{$^\ddagger$This is the author’s version of the work accepted for publication in \emph{Proceedings of the 2025 IEEE Conference on Decision and Control}. The final accepted version will be available at IEEE Xplore.}
\quad Mouhacine Benosman$^1$ \quad  Wenliang Liu$^1$\\ 
\textbf{Federico Pecora}$^1$\quad \textbf{Joseph W. Durham}$^{1}$\\
$^{1}$Amazon Robotics, North Reading, MA\\
$^2$Computer Science, Boston College, Chestnut Hill, MA\\
\tt{\{uakhan,mbenos,liuwll,fpecora,josepdur\}@amazon.com}
}%
\begin{document}
\maketitle
\thispagestyle{empty}
\pagestyle{empty}

\begin{abstract}
In this paper, we propose a novel methodology for path planning and scheduling for multi-robot navigation that is based on optimal transport theory and model predictive control. We consider a setup where~$N$ robots are tasked to navigate to~$M$ targets in a common space with obstacles. Mapping robots to targets first and then planning paths can result in overlapping paths that lead to deadlocks. We derive a strategy based on optimal transport that not only provides minimum cost paths from robots to targets but also guarantees non-overlapping trajectories. We achieve this by discretizing the space of interest into~$K$ cells and by imposing a~${K\times K}$ cost structure that describes the cost of transitioning from one cell to another. Optimal transport then provides \textit{optimal and non-overlapping} cell transitions for the robots to reach the targets that can be readily deployed without any scheduling considerations. The proposed solution requires~$\mathcal O(K^3\log K)$ computations in the worst-case and~$\mathcal O(K^2\log K)$ for well-behaved problems. To further accommodate potentially overlapping trajectories (unavoidable in certain situations) as well as robot dynamics, we show that a temporal structure can be integrated into optimal transport with the help of \textit{replans} and \textit{model predictive control}.

\end{abstract}

\section{INTRODUCTION}
Robot-to-target assignment and navigation is an important problem in multi-robot swarms, automated warehouses, and multi-armed robotic manipulations~\cite{DeepFleet}. In these problems,~$N$ robots are tasked to navigate to~$M$ targets in a common space with obstacles. This problem is also known as MAPF (multi-agent path finding)~\cite{Stern2019}, where robot-target pairs are given, and the task is to navigate the entire fleet of robots to their pre-assigned targets. A variant, anonymous MAPF~\cite{Ali2023}, is where any robot-target pairing is valid and we seek the pairing with minimal cost (travel, makespan) for the entire fleet. Anonymous MAPF (AMAPF) is tractable under certain assumptions, akin to solving a linear program, whereas general MAPF is significantly more challenging, requiring an integer linear program due to the terminal constraints.

In this paper, we study the AMAPF problem, which is highly relevant in scenarios where it is sufficient for any robot to reach any target. A plausible approach for AMAPF is to solve an assignment problem over an~${N\times M}$ cost matrix, whose elements are the minimum costs to travel between all robot-target pairs. This approach, although simple, has several disadvantages: 
\begin{inparaenum}[(i)]
    \item needs pre-computed minimum travel costs between all robot-target pairs;
    \item each min-cost path is computed independently and the procedure is susceptible to collisions, bottlenecks, and deadlocks;
    \item the search space is restricted to exactly one min-cost path for every pair and thus ignores other paths of potentially the same cost. 
\end{inparaenum}

In contrast, we propose an optimal transport (OT) framework, formulated over a~$K$-cell discretization of the underlying space~$\Omega$, that does not rely on pre-computed paths. In particular, we construct a~${K\times K}$ cost matrix~$C$ of single cell transitions, that captures every possible path (sequence of transitions) between all robot-target pairs. The resulting transport~$\Pi$ is optimized over~$C$ and therefore over all possible paths between all robot-target pairs. The  transport~$\Pi$ must adhere to the marginal distribution constraints, which lead to non-overlapping paths. In situations where overlapping paths are unavoidable, we propose replans where the OT problem is solved again on a finer discretization of~$\Omega$, equivalent to increasing travel capacity of the cells. Combining replans with MPC (model-predictive control), we show that each robot is able to track the OT trajectories, within a bounded error, and eventually converges exponentially to the final OT trajectory leading them to the desired targets.

While OT has a long history, it has gained renewed attention due to its applications in machine learning and data science~\cite{Villani2003,Peyre2019,Figalli2022}. Its flexibility and mathematical rigor have also motivated exploration in multi-agent path planning (PP) problems. Existing work includes~\cite{Bandyopadhyay2014,deBadyn2018} that consider density control of networked multi-agent systems. Refs.~\cite{Krishnan2018, Frederick2022} prescribe transport PDEs and gradient flows for mult-robot swarms, while~\cite{kachar2022dynamic} integrates robot dynamics and LQR control costs in the~${N\times N}$ OT cost matrix. Related work also includes:~\cite{Emerick2023}, on modeling swarms as continuum densities;~\cite{Le2023}, on entropic-regularized formulations; and more recently~\cite{Liang2024}, which utilizes diffusion models in continuous spaces. See also~\cite{Stern2019} for a comprehensive survey on MAPF, and~\cite{Ma2016, Andreychuk2022, Andreychuk2023, Fine2023} for some recent results on anonymous MAPF. In contrast, the proposed OT-based approach is formulated such that the entire space~$\Omega$ is reconfigured from a starting distribution to the desired distribution, while naturally resulting in non-overlapping trajectories due to the marginal distribution constraints on the transport plan. 

We further enhance our approach by incorporating real-time replanning, refining the space to account for cell travel capacity, and integrating temporal trajectory tracking using model predictive control (MPC)~\cite{Rossiter2004,Benosman2016}. The receding horizon nature of MPC interleaved with the OT planner constitutes an attempt to integrate time constraints in OT, which is a notoriously hard problem~\cite{arxiv_K_SHI}. Moreover, the proposed OT formulation opens up new directions for MAPF by e.g., adding robot dynamics and collision avoidance via appropriate control actions, while also offering the flexibility to shape paths with varying geometry and density through the cost matrix or additional constraints. Furthermore, it bridges MAPF with recent advances in OT, such as low-rank approximations, entropic smoothing, and gradient flow formulations, which can be employed to develop scalable solutions and other modern variations.

We now describe the rest of the paper. Section~\ref{sec_pf} provides the problem formulation and recaps OT. Section~\ref{sec_ot} describes our main contribution on using OT for joint matching and path planning. We consider replans and MPC-based OT in Sections~\ref{sec_replans} and~\ref{sec_mpc}, respectively. Finally, Section~\ref{sec_sims} provides numerical simulations and Section~\ref{sec_conc} concludes the paper. 


\section{PROBLEM FORMULATION}\label{sec_pf}
Consider~$N$ robots and~$M$ targets located in a region of interest~${\Omega\subset\mathbb R^2}$ with~${\kappa\subset\Omega}$ being the set of obstacles. The starting (centroid) location of robot~$n$ is denoted by~$\mathbf p_n$ and the~$m$-th target location is given by~$\mathbf r_m$, with~$n\in[1,\ldots,N],m\in[1,\ldots,M]$. 
We assume that each robot~$n$ has the following dynamics~(${t\geq0}$):
\begin{align}\label{ss1}
    \dot{\mathbf{x}}_n(t) &= f_n(\mb x_n(t)) + g_n(\mb x_n(t))\cdot \mb u_n(t),\\\label{ss2}
    \mb y_n(t) &= h_n(\mb x_n(t)),
\end{align}
where~$\mb x_n(t)$ is the state of robot~$n$,~$\mb u_n(t)$ is the control input, and~$\mb y_n(t)$ is its location, with~${\mb y_n(0) = \mb p_n}$. Let~$q_{nm}(t)$ be a path that starts at robot~$n$ and ends at target~$m$. Let~${L=\min(N,M)}$, our goal is to design~$L$ control inputs that map distinct robot-target pairs in a cost-optimal way:
\begin{align}
    \min_{n,m }\sum_{l=1}^L\int_0^T c(q_{nm}(t)) dt,
\end{align}
where~$c(\cdot)$ denotes the cost to traverse a path and may include both the path length and the control cost. Since we are dealing with AMAPF, the minimization is over all robot-target pairs. Ideally, we would like to design trajectories that are non-overlapping and therefore each robot can charter its own course without any scheduling considerations. 


This is a challenging problem overall as it requires minimizing the total cost (path length plus control inputs) over all distinct robot-target pairs. In this paper, we break this problem into three phases that are discussed in the following three sections: 
\begin{inparaenum}[(i)] 
\item \textbf{Section~\ref{sec_ot}--OT-based matching and path planning:} Find \textit{non-overlapping} minimum cost paths that match~$L$ distinct robots to $L$ distinct targets such that no other robot-target pair could result in a lower cost. We achieve this with the help of optimal transport without including the control cost in the first phase; 
\item \textbf{Section~\ref{sec_replans}--replans:} When overlapping paths are acceptable and can be implemented, e.g., through scheduling, we propose finer discretizations, effectively increasing travel capacity in certain lanes. To implement this, we decompose the paths obtained from OT into shorter segments and solve a series of OT problems for each segment;
\item \textbf{Section~\ref{sec_mpc}--Integrating MPC:} Finally, we combine controller design based on the robot kinematics \eqref{ss1}-\eqref{ss2} with the paths obtained from OT replans in order to add both physical and temporal constraints. 
\end{inparaenum}
Before we proceed, we briefly recap optimal transport.

\subsection{Optimal Transport}
In 1781, the French mathematician Gaspard Monge formulated the problem of transporting mass from a source destination to a target destination that can be cast in terms of probability measures. Consider a source measure~${\mu:\mc A\ra\mbb R_+}$ and a target measure~${\nu:\mc B\ra\mbb R_+}$, where~$\mu(A)$ tells us how much mass we have in the set~${A\subseteq\mc A}$, with~${\mu(\mc A) = \nu(\mc B)=1}$. Optimal transport seeks a map~${T:\mc A \ra \mc B}$ such that
\begin{align*}
\mu(T^{-1}(B)) := \mu(\{a\in\mc A|T(a)\in B\}) = \nu(B),~\forall B\subseteq \mc B,
\end{align*}
where~$\mu(T^{-1}(B))$ is the push-forward measure~$T\sharp\mu$ of~$\mu$ through~$T$~\cite{Villani2003,Peyre2019,Figalli2022}. Monge's OT formulation is given by
\begin{align}\label{monge_eq}
\inf_T \int_{\mc A}c(a,T(a)) d\mu(a)\qquad\mbox{ s.t. }T\sharp\mu=\nu.
\end{align}
An issue with Monge's formulation is that a feasible map~$T$ may not exist because it does not allow mass to be split. For example, consider the source measure~$\mu$ to be a delta function and the target measure~$\nu$ to be some continuous measure, then~$T$ can only push-forward the source delta to another delta and thus a feasible solution does not exist. 

In 1939, Leonid Kantorovich provided an alternate formulation that does not consider a transport map but instead seeks a transport \textit{plan} that allows mass to be split and go to different places. A transport plan~$\Pi(A,B)$ describes how much mass goes from~${A\subseteq\mc A}$ to~${B\subseteq \mc B}$, which in fact is a measure on the product space~${\mc{A\times B}}$. We would like mass conservation that is given by~$
{\Pi(A,\mc B) = \mu(A)}, {\Pi(\mc A, B) = \nu(B)}$,
$\forall {A\subseteq\mc A},{B\subseteq\mc B}$. These constraints focus only on those measures, denoted by~$\Pi(\mu,\nu)$, that have the correct marginals instead of all possible joint measures~$\Pi(\mc {A,B})$. We now consider a cost~$c(a,b)$ of going from~${a\in\mc A}$ to~${b\in\mc B}$ weighted further by the amount of mass moving from~$a$ to~$b$. Formally, Kantorovich's transport problem is given by
\begin{align}\label{kant_eq}
\inf_{\pi(a,b)\in\Pi(\mu,\nu)} \int_{\mc{A\times B}} c(a,b) d\pi(a,b).
\end{align}

When both the source and target measures are discrete resources (as in robots being matched to targets), OT takes a very convenient form. Consider two discrete densities:
\[
\mu(a) = \sum_{i=1}^n \mu_i\delta_{\mathbf p_i}(a),\qquad \nu(b) = \sum_{j=1}^m \nu_j\delta_{\mathbf r_j}(b),
\]
where~${\delta_{\mb p_i}(a)=+\infty}$, when~${a=\mb p_i}$, and~$0$ otherwise. A transport plan is now a matrix~${\Pi=\{\pi_{ij}\}}$ with elements~${\pi_{ij}\geq0}$ that tells us how much mass from~$\mb p_i$ is moved to the location~$\mb r_j$, such that mass conservation is not violated. Discrete OT is formally stated as
\begin{align*}
&\inf_{\Pi=\{\pi_{ij}\}} \sum_{i}\sum_j \pi_{ij}\cdot c(\mb p_i,\mb r_j), \\
\mbox{s.t.~~}&\sum_{j=1}^m \pi_{ij} = \mu_i,\forall i,~~
\sum_{i=1}^n \pi_{ij} = \nu_j,\forall j,~~
\sum_{i=1}^n \mu_i = \sum_{j=1}^m \nu_j.
\end{align*}
The first constraint says that all mass at each~$\mb p_i$ is appropriately transported; the second constraint is that the demand at each~$\mb r_j$ is met; while the last constraint is mass conservation. 

\begin{figure*}[!h]
  \centering
  \includegraphics[width=1.7in]{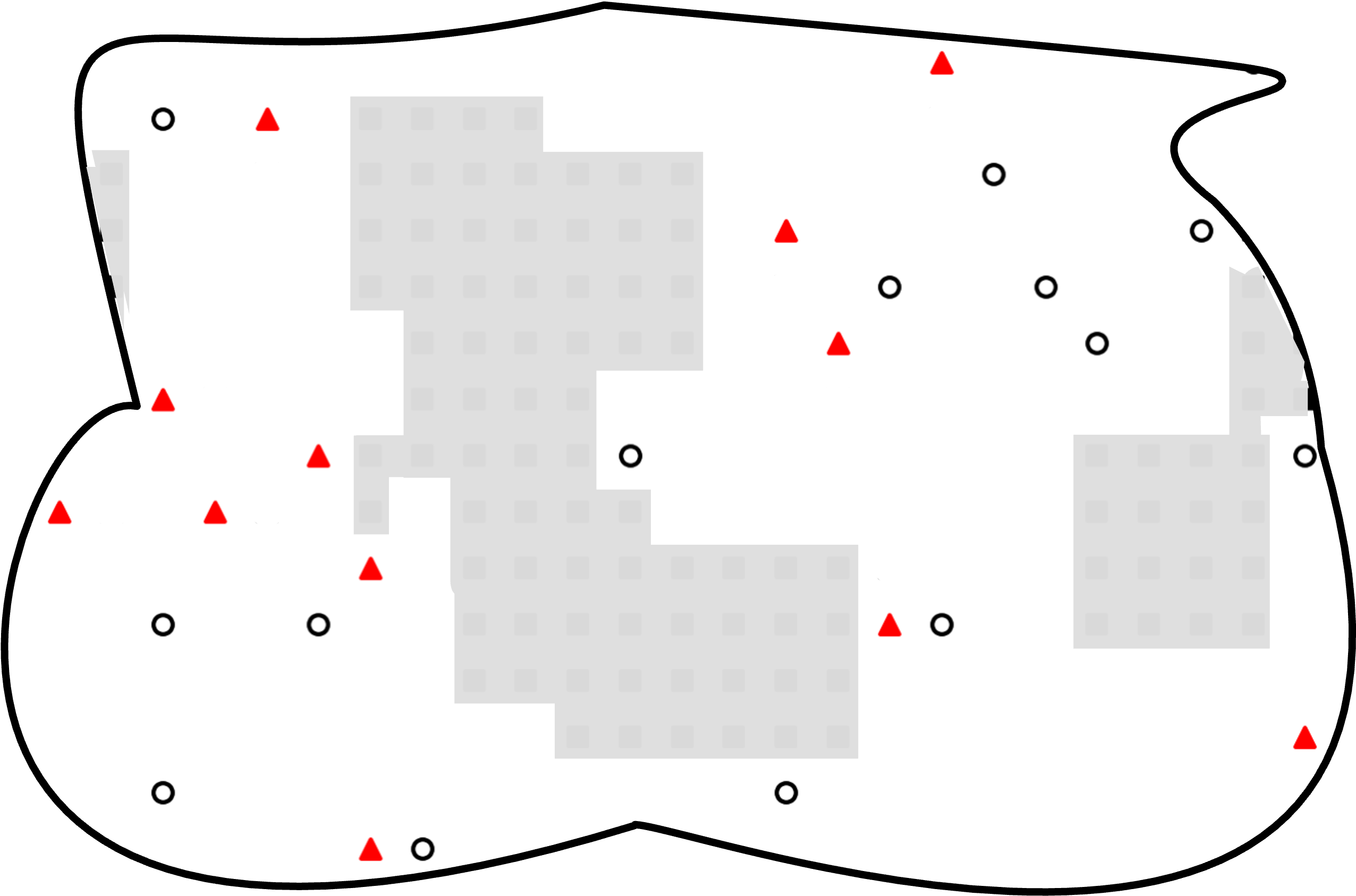}
  \hspace{0.2cm}
  \includegraphics[width=1.7in]{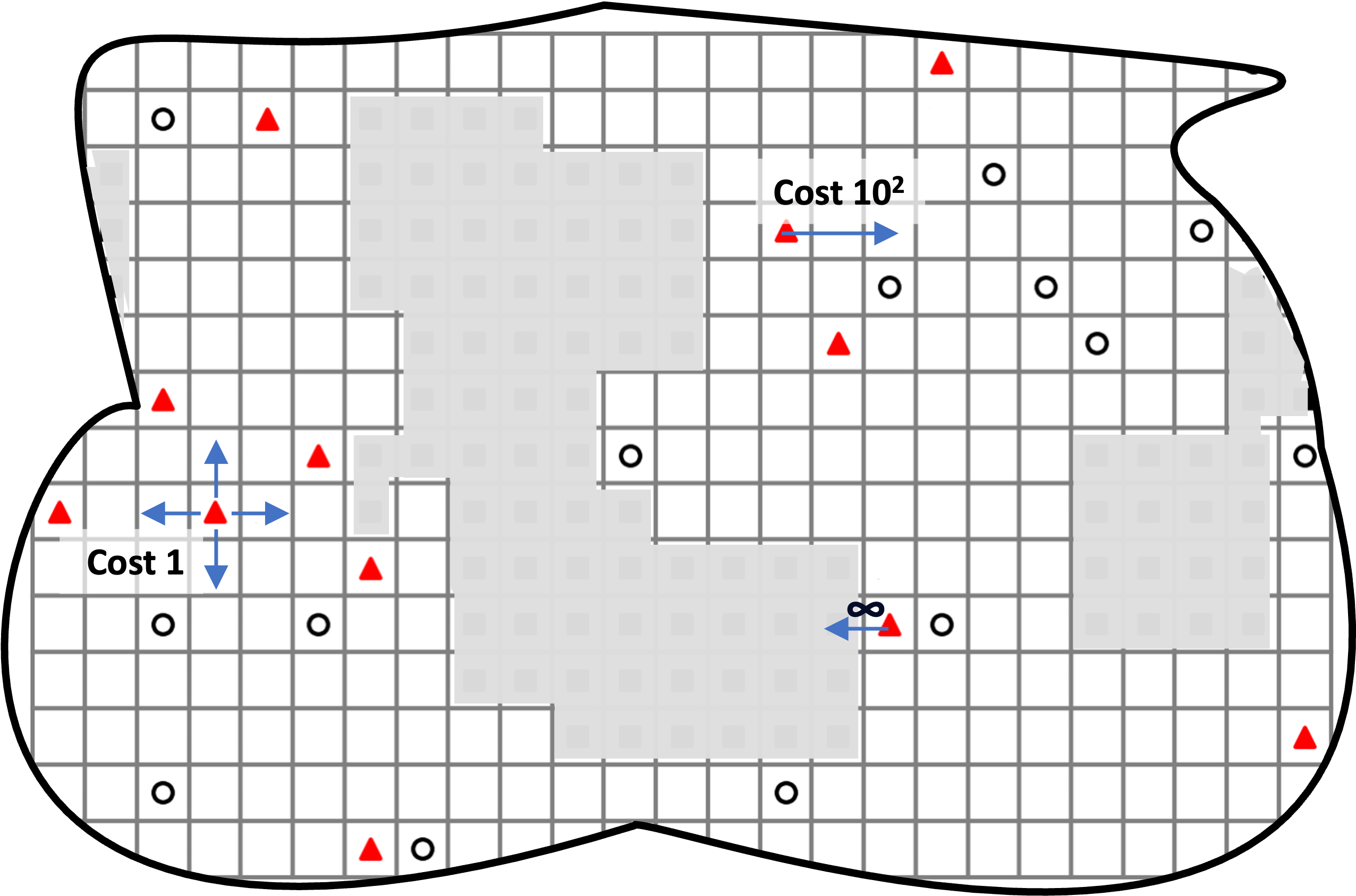}
  \hspace{0.2cm}
  \includegraphics[width=1.7in]{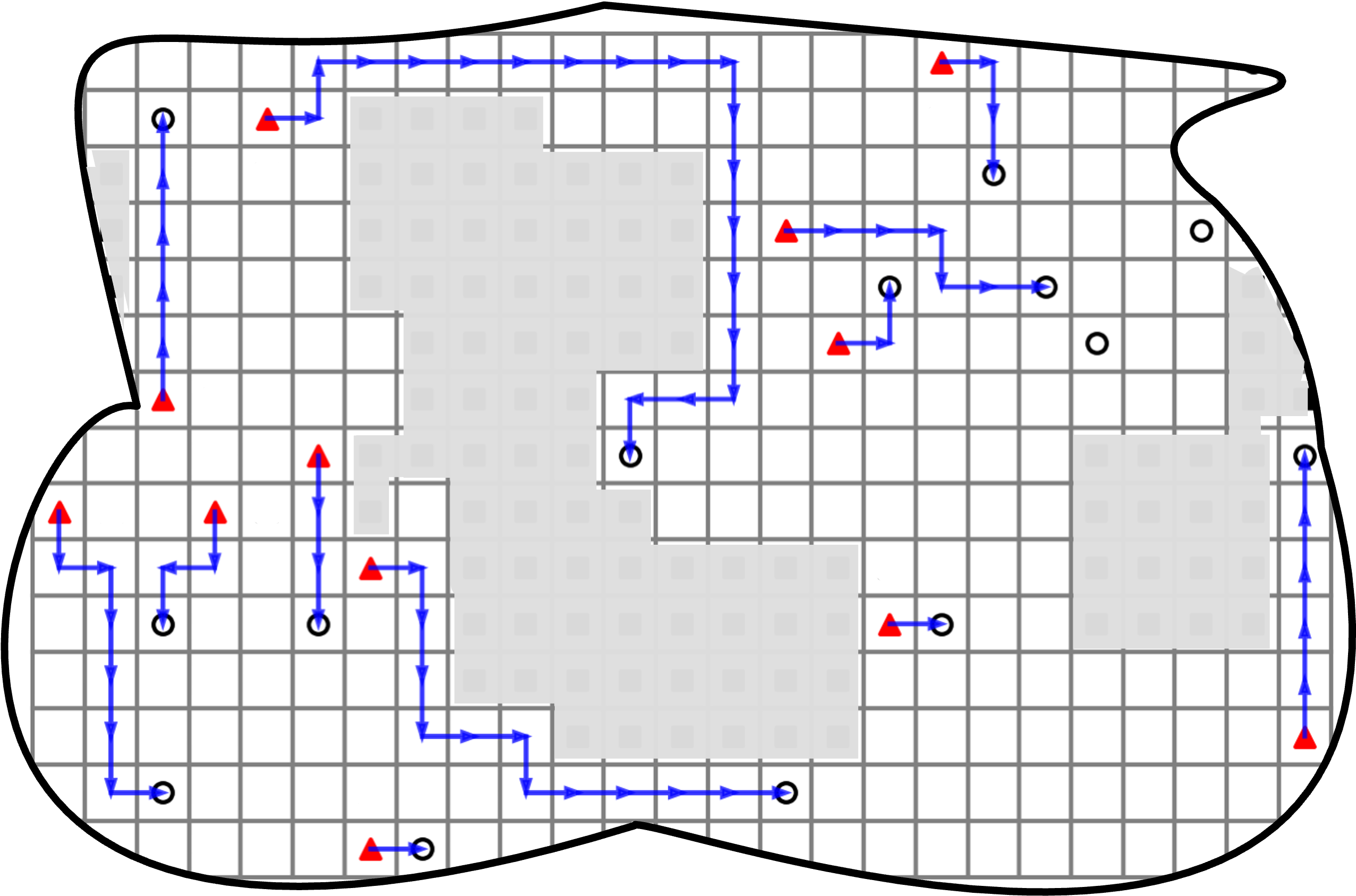}
  \caption{(Left) Robots~$\blacktriangle$ and targets~$\circ$ in the space~$\Omega$ with obstacles in shaded gray: There are~$12$ robots and~$14$ targets indicating that~$2$ targets will be left unassigned. The robots are tasked to travel to distinct targets in a cost optimized way. (Middle) Discretization of~$\Omega$ with the cost of cell transitions. (Right) Minimum-cost paths robots take to targets: Clearly, robots picking nearest target is sub-optimal. }
  \label{fig1}
\end{figure*}
\section{OT-BASED MATCHING and PATH PLANNING}\label{sec_ot}
In this section, we describe the main idea of this paper: to use optimal transport (OT) for joint matching and path planning. The discrete OT formulation described earlier has a direct relation to the robot-target matching problem. In its simplest form, the transport plan~$\Pi$ can be chosen as an~${N\times M}$ matrix with elements in~$\{0,1\}$ such that each row and each column sums to~$1$; giving us the minimum cost robot-target matching. However, this implementation does not consider all available paths from the robots to the targets, does not explicitly account for obstacles, and therefore the resulting robot-target pairings may result in collisions, bottlenecks, and deadlocks. \textit{Our goal is not only to match robots to targets but to also compute optimal, non-overlapping paths that each robot could navigate.} To achieve this, we cast a novel and unique OT formulation on a discretization of the underlying space~$\Omega$. 

\textbf{Discretization of~$\Omega$:}
Let~$\mc D(\Omega)$ be a discretization of~$\Omega$ into~$K$ cells; the cell size/placement is such that no robot or target can occupy more than one cell. An example configuration of~$\Omega$ and the grid-discretization is shown in Fig.~\ref{fig1}; we choose a uniform, equally-spaced discretization for the sake of simplicity, which can be relaxed as we discuss later in Section~\ref{disc_rev}. 

\textbf{Cost Matrix~$C$:} We now design a~${K\times K}$ cost matrix~${C=\{c_{ij}\}}$ that describes the transition cost from one cell to another such that moving to the adjacent cells is cheaper and incurs a cost of, e.g.,~$1$, while moving to any other distant cell has a higher cost and staying in the same cell has~$0$ cost. Similarly, the cost to move into an obstacle is chosen arbitrarily high. An example cost is shown in Fig.~\ref{fig1} (middle). With the help of this space discretization and cost structure, a particular novelty of our approach is that the robots are incentivized to travel to the targets by successively sliding into nearby cells instead of making a long-distance one-step match (jump) to the targets. The following OT formulation formally describes our approach. 

\textbf{OT Formulation:} Optimal transport provides the optimal strategy to move mass from a source distribution to a target distribution. Let~${\boldsymbol{\mu}=\{\mu_k\}\in\mbb R^K}$ be a~$\{0,1\}$ vector of the \textit{source distribution} such that
\begin{align}\label{mueq}
\mu_k = \left\{
\begin{array}{cc}
    1, & \mbox{cell~$k$ is not a target},\\
    0, & \mbox{cell~$k$ is a target}.
\end{array}\right.
\end{align}
Similarly, let~${\boldsymbol{\nu}=\{\nu_k\}\in\mbb R^K}$ be a~$\{0,1\}$ vector of the \textit{target distribution} such that
\begin{align}\label{nueq}
\nu_k = \left\{
\begin{array}{cc}
    1, & \mbox{cell~$k$ is not a robot},\\
    0, & \mbox{cell~$k$ is a robot}.
\end{array}\right.
\end{align}
This unconventional encoding of the source and target distributions is then used in OT to move the mass at~$\bmu$ to~$\bnu$. Note that, by design, both~$\bmu$ and~$\bnu$ include a \textit{virtual} one-unit mass on the \textit{entire} free space where we have neither robots nor targets. Since the cost of staying in place is zero, this virtual mass is displaced only if it facilitates a lower-cost sequence of transitions to transport the non-virtual mass in the source (robots) to the non-virtual mass in the target distribution. An alternate viewpoint of this mass transport is that, instead of matching the robots to targets (and solving an~${N\times M}$ assignment), the proposed setup assigns mass to the entire space~$\Omega$ such that mass transport is achieved by reconfiguring~$\Omega$ from a source configuration to a desired target configuration in a cost-optimal way. The following example explains this concept. 
\begin{figure}[!htb]
  \centering
  \includegraphics[width=3in]{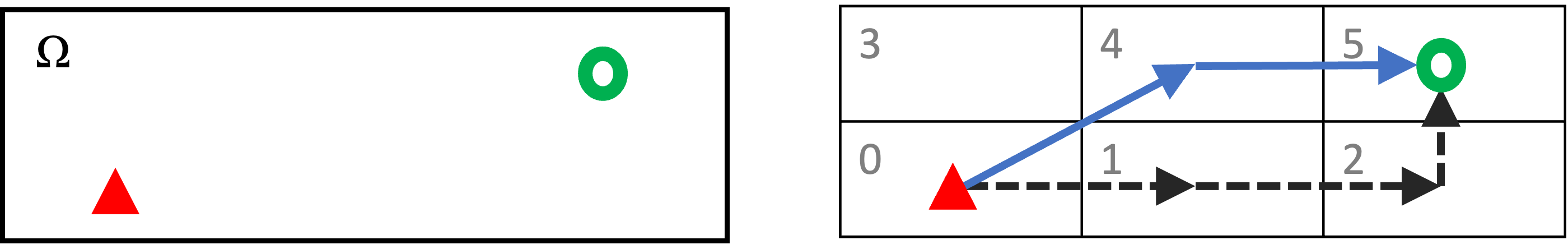}
  \caption{(Left) Example scenario. (Right) Discretization and viable paths. }
  \label{fig2}
\end{figure}

\textit{Example: }Consider a simple one robot and one target case in a region of interest~$\Omega$ (Fig.~\ref{fig2}). We discretize~$\Omega$ into a~$3\times 2$ grid, with~${K=6}$ cells. With this discretization, the robot occupies cell~$0$, while the target occupies cell~$5$. An example cost matrix for this problem is
\begin{align*}
    C = \left[
    \begin{array}{cccccc}
         0 & 1 & 10^2 & 1 & 1 & 10^3\\
         1 & 0 & 1 & 1 & 1 & 1\\
         10^2 & 1 & 0 & 10^3 & 1 & 1\\
         1 & 1 & 10^3 & 0 & 1 & 10^2\\
         1 & 1 & 1 & 1 & 0 & 1\\
         10^3 & 1 & 1 & 10^2 & 1 & 0
    \end{array}
    \right],
\end{align*}
where~$c_{ij}$ is the cost of moving mass from cell~$i$ to cell~$j$.  The source and target distribution vectors are~${\bmu^\top = [1~1~1~1~1~0]}$ and ${\bnu^\top = [0~1~1~1~1~1]}$, respectively. The assignment matrix~${\Pi=\{\pi_{ij}\}}$, shown below, is such that~$\pi_{ij}$ represents how much of the mass~$\mu_i$, at location~$i$, is transported to meet the demand~$\nu_j$, at location~$j$. In other words, we would like the row sum of~$\Pi$ to match~$\bmu$ and the column sum of~$\Pi$ to match~$\bnu$ ensuring that the source mass is successfully transported to the target. Since the cost of keeping mass at its location is zero, i.e.,~${c_{i=j}=0}$, choosing~$\Pi$ to be identity could result in the smallest (zero) cost. However, since no mass can be brought to cell~$0$ (column-sum constraint), we must have~${\pi_{00}=0}$, i.e., mass at cell~$0$ cannot stay in place. Consequently, mass at cell~$0$ must incur a non-zero transition cost to ensure that it is transported out (row-sum constraint). In other words, the robot must be displaced to an adjacent cell~$1,3$, or~$4$ (each with a unit cost). Suppose,~${\pi_{04}=1}$, i.e., the robot slides into cell~$4$ leading to~${\pi_{44}\neq1}$ (column-sum constraint); mass at cell~$4$ therefore moves out to satisfy the row-sum constraint, which can be achieved by~${\pi_{45}=1}$. The rest of the assignment is~${\pi_{11}=\pi_{22}=\pi_{33}=1}$, each with zero cost. The entire assignment is shown by underlined variables below and can be verified that it matches the constraints. The non-zero costs describe the path~$0\ra4\ra5$ taken by the robot to the target. 
\begin{align*}
    &\begin{array}{cccccc}
    \hspace{-0.75cm}\bmu\backslash\bnu~~\:\:0 & ~1 & ~~\:1 & \!~~~1 & ~~~1 & ~~1
    \end{array}\\
    \left.
    \begin{array}{c}
         1\\
         1\\
         1\\
         1\\
         1\\
         0
    \end{array}\right|
    &\left[
    \begin{array}{cccccc}
         0 & \pi_{01} & \pi_{02} & \pi_{03} & \underline{\pi_{04}} & \pi_{05}\\
         0 & \underline{\pi_{11}} & \pi_{12} & \pi_{13} & \pi_{14} & \pi_{15}\\
         0 & \pi_{21} & \underline{\pi_{22}} & \pi_{23} & \pi_{24} & \pi_{25}\\
         0 & \pi_{31} & \pi_{32} & \underline{\pi_{33}} & \pi_{34} & \pi_{35}\\
         0 & \pi_{41} & \pi_{42} & \pi_{43} & \pi_{44} & \underline{\pi_{45}}\\
         0 & 0 & 0 & 0 & 0 & 0
    \end{array}
    \right]
\end{align*}

\subsection{Balanced Case~$(N=M)$} 
The above exposition can be cast in a precise mathematical framework. In particular, the OT problem is to find a transport plan~$\Pi$, among all possible plans, that results into the minimum cost such that the source vector~$\bmu$ in~\eqref{mueq} is mapped to the target vector~$\bnu$ in~\eqref{nueq}, i.e., 
\begin{align*}
\mbox{\textbf{P1:}~~~} \inf_{\Pi=\{\pi_{ij}\}} \sum_{i}&\sum_j \pi_{ij}\cdot c_{ij},\qquad i,j=\{1,\ldots,K\}, \\
\mbox{s.t.~~~~}&\Pi\cdot \mb1_K=\bmu,~~
\mb 1_K^\top \Pi = \bnu^\top,
\end{align*}
where~$\mb 1_K$ is a column vector with~$K$ ones; the constraints ensure that the transport has the correct marginals. Clearly, multiple solutions may exist and the minimum cost path is not necessarily unique (both ${0\ra4\ra5}$ and ${0\ra1\ra5}$ in Fig.~\ref{fig2}). Optimal transport provides one such min-cost transport as noted by the~$\inf$ in~\textbf{P1}. To describe the main results and guarantees of~\textbf{P1}, we next provide a key concept. 

\begin{defn}\label{pfdisc}
A practically feasible discretization~$\overline{\mc D}(\Omega)$ of~$\Omega$, with~$K$ cells, is such that there exists a transport plan~$\Pi$ on~$\overline{\mc D}(\Omega)$ with the following properties: 
\begin{enumerate}[(i)]
\item $\mb1_K^\top (\Pi\odot C)\mb 1_K$ is finite; 
\item the row- and column-sum constraints are satisfied by~$\Pi$;
\item if~$\pi_{ij}=1$, for any~$i$ and~$j$, then~$c_{ij}\leq 1$.
\end{enumerate}
\end{defn}

Definition~\ref{pfdisc} formalizes all possible discretizations of~$\Omega$, where the OT problem has a well-behaved solution, achievable by at-most one-step transitions (we assume for simplicity that one-step transitions that are physically realizable have a cost of~$\leq1$). In other words,~$\Pi$ does not require a cell transition where a robot jumps more than one cell. Scenarios that violate Def.~\ref{pfdisc} will be considered in the next section. Key implications and guarantees regarding \textbf{P1} are presented next. 

\begin{lem}\label{lem1}
    Let~${N=M}$ robots and targets be situated~in~$\Omega$ with a practically feasible discretization~$\overline{\mc D}(\Omega)$. The transport~$\Pi^*$ obtained from the OT Problem in \textbf{P1}, described on~$\overline{\mc D}(\Omega)$, has the following properties: 
    \begin{enumerate}[(i)]
    \item $\Pi^*$ has~$\{0,1\}$ elements, i.e., the robots do not split on their paths to the targets; 
    \item $\Pi^*$ provides non-overlapping paths; 
    \item The total mass is conserved by the transport, i.e., all robots reach a distinct target;
    \item $\Pi^*$ is optimal under~$\overline{\mc D}(\Omega)$ and the OT constraints;
    \item The complexity of \textbf{P1} ranges from~$\mc O(K^3\log K)$ to~$\mc O(K^2)$ for well-behaved problems. 
    \end{enumerate}
\end{lem}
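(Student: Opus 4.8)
My plan is to treat \textbf{P1} as a classical balanced transportation linear program and to read off the combinatorial structure of its optimal vertices. For parts (i) and (iv) I would first record three standard facts: the matrix obtained by stacking the row-sum and column-sum equalities of \textbf{P1} is the vertex--edge incidence matrix of a bipartite graph, hence totally unimodular; the feasible region is bounded ($0\le\pi_{ij}\le\mu_i\le1$) and nonempty, since the practically feasible discretization hypothesis provides, via Def.~\ref{pfdisc}, an admissible plan of finite cost; therefore the infimum in \textbf{P1} is attained at a vertex of the transportation polytope (see, e.g., \cite{Peyre2019}). Because $\bmu,\bnu\in\{0,1\}^K$ are integral, total unimodularity forces every vertex to be integral, so a basic optimal solution $\Pi^*$ has integer entries; combined with nonnegativity and row/column sums lying in $\{0,1\}$ this forces $\Pi^*\in\{0,1\}^{K\times K}$, which is part (i). Part (iv) is then a restatement: $\Pi^*$ minimizes the transport cost over all admissible plans on $\overline{\mc D}(\Omega)$, and since the integral optimum coincides with the LP optimum, no (even fractional) plan does better.

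For parts (ii) and (iii) I would exploit the encoding \eqref{mueq}--\eqref{nueq}: a cell is a source iff it is a robot or free cell ($\mu_k=1$) and a sink iff it is a target or free cell ($\nu_k=1$), so robot cells are pure sources, target cells pure sinks, and free cells carry one ``virtual'' unit in and out. Defining $\sigma(i)=j$ whenever $\pi^*_{ij}=1$, part (i) together with the marginal constraints makes $\sigma$ a bijection from the non-target cells onto the non-robot cells — two sets of equal size precisely because $N=M$. Fix a robot cell $r$; since $r$ is not a non-robot cell it lies outside $\mathrm{range}(\sigma)$, so the forward orbit $r,\sigma(r),\sigma^2(r),\dots$ contains no repeated cell (a repeat would, by injectivity of $\sigma$, propagate back to a cycle through $r$), hence is a simple path that must halt at the first cell outside $\dom(\sigma)$, i.e.\ at a target cell — this is exactly the route of robot $r$. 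Injectivity of $\sigma$ also gives every cell a unique $\sigma$-predecessor, so backward tracing from any cell reaches at most one robot; therefore the routes of distinct robots never share a cell, which is part (ii). Backward tracing from a target cell always reaches a robot, distinct targets reach distinct robots, and with $N=M$ this is a bijection, so every robot arrives at a distinct target; mass conservation is built into the equality constraints together with $\sum_k\mu_k=\sum_k\nu_k$ (valid since $N=M$), completing part (iii). As a byproduct, the cells on no robot path are permuted among themselves by $\sigma$, and any nontrivial cycle there would incur strictly positive cost (adjacent transitions cost $1$, only staying put costs $0$), so optimality forces $\sigma$ to be the identity on them — consistent with, though not needed beyond, non-overlap.

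Finally, for the complexity claim in part (v) I would cast \textbf{P1} as a minimum-cost flow problem on the bipartite network with $K$ supply and $K$ demand nodes, an arc $(i,j)$ of cost $c_{ij}$ per admissible transition, and total supply $\sum_k\mu_k=K-M=\mc O(K)$. Solving by successive shortest augmenting paths with node potentials (each search a Dijkstra call): integrality makes every augmentation push at least one unit, so $\mc O(K)$ augmentations suffice, and each Dijkstra call on the $\mc O(K^2)$-arc graph costs $\mc O(K^2\log K)$ with a heap, giving the worst-case $\mc O(K^3\log K)$. For ``well-behaved'' instances — where $C$ is sparse so only the $\mc O(K)$ physically adjacent transitions are usable, and/or the optimal plan differs from the identity on only a few short paths so far fewer augmentations are required — both the per-iteration cost and the number of augmentations shrink, bringing the bound down through $\mc O(K^2\log K)$ to $\mc O(K^2)$; one may alternatively quote the matching network-simplex or assignment-algorithm bounds. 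I expect part (v) to be the least clean step: it depends on which algorithm one commits to and on pinning down a precise notion of ``well-behaved,'' whereas (i)--(iv) follow rather directly from transportation-polytope integrality and the partial-permutation structure of $\Pi^*$.
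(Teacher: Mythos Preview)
Your argument is correct and rests on the same two pillars the paper uses: total unimodularity of the transportation constraint matrix for part (i), and the column-sum (marginal) constraint for part (ii). Where you differ is mainly in the level of explicitness. For (ii)--(iii) the paper gives a one-line contradiction (``two masses arriving at cell~$j$ would force $\pi_{i_1j}=\pi_{i_2j}=1$, violating the column sum''), whereas you actually \emph{construct} the paths by defining the bijection $\sigma$ on non-target cells and tracing forward/backward orbits; this makes precise what the paper leaves implicit (namely, what ``path'' means in terms of $\Pi^*$) and cleanly yields both non-overlap and the robot--target bijection. For (v) the paper simply cites interior-point methods for the $\mathcal O(K^3\log K)$ bound and auction algorithms for the $\mathcal O(K^2)$ bound, while you recast \textbf{P1} as min-cost flow and invoke successive shortest paths; both routes land on the same figures, and your caveat that the lower bound hinges on an unspecified notion of ``well-behaved'' is fair --- the paper does not pin this down either.
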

\begin{proof}
    \begin{inparaenum}[(i)]
    It can be verified that \textbf{P1} is a linear program (LP) with linear constraints and is well-defined over~$\overline{\mc D}(\Omega)$, leading to the following. 
    \item The marginal constraints result in a constraint matrix that is \textit{totally unimodular}, ensuring that the underlying LP has integral solutions;
    \item We can prove this by contradiction. Assume that two paths intersect at cell~$j$, then masses from two different cells~$i_1$ and~$i_2$ arrive at cell~$j$, i.e.,~${\pi_{i_1j}=\pi_{i_2j}=1}$, which violates the column-sum constraint since~${\bmu,\bnu,\Pi\in\{0,1\}}$. The same argument applies to other situations. 
    \item We have~${\boldsymbol{\mu}^\top\mb1_K=\boldsymbol{\nu}^\top\mb1_K}$, since~${N=M}$, and the mass conservation constraint is satisfied, which in addition to~(i)-(ii) ensures that all robots reach a distinct target; 
    \item \textbf{P1} is a feasible LP and therefore finds one of the optimal solutions. 
    \item LPs can be solved using interior point methods, at~$\mc O(K^3 \log K)$, or with auction algorithms at~$\mc O(K^2)$ under favorable conditions.
    \end{inparaenum}  
    \qed
\end{proof}
Assertion (i) represents a key theoretical contribution of this work, i.e., it establishes that the proposed OT formulation inherently preserves integrality, ensuring that mass is never split during the entire transport. This result leverages classical unimodularity results for integral solutions to LPs~\cite{schrijver1998}, but its application in this setting is both novel and essential. Assertions (ii)–(iv) then build on this foundation and ensure non-overlapping, optimal paths that cover all robots and targets, while assertion (v) follows from standard arguments on algorithmic and computational aspects~\cite{Dantzig1968}.

\subsection{Unbalance Case~$(N\neq M$)}
The classical OT setup described above in \textbf{P1} returns a transport plan~$\Pi$ that preserves the total mass between the source and target, ensuring that each robot reaches a distinct target. In scenarios with unequal robots and targets, the formulation in \textbf{P1} can be modified with the help of unbalanced optimal transport (UOT) as described next~\cite{Chizat2018,Chapel2020PartialOT}. Let~${m=\min(\|\bmu\|_1,\|\bnu\|_1)}$ be the least amount of mass that must be transported. The UOT formulation is given by
\begin{align*}
\mbox{\textbf{P2:}~~~} &\inf_{\Pi=\{\pi_{ij}\}} \sum_{i}\sum_j \pi_{ij}\cdot c_{ij},\qquad i,j=\{1,\ldots,K\}, \\
\mbox{s.t.~~}&\Pi\cdot \mb1_K\leq\bmu,~~
\mb 1_K^\top \Pi \leq \bnu^\top,~~\mb 1_K^\top \Pi\mb 1_K= m.
\end{align*}
This formulation is best explained with a simple example. Consider the same setup as in Fig.~$2$ but with an additional robot at cell~$1$. The cost matrix remains the same as before. The transport plan~$\Pi$ with the UOT constraints in \textbf{P2} is 
\begin{align*}
    &\begin{array}{cccccc}
        \hspace{-0.75cm}\bmu\backslash\bnu~~\:\:0 & 0 & ~~1 & ~~1 & ~~1 & ~~~1
    \end{array}\\
    \left.
    \begin{array}{c}
         1\\
         1\\
         1\\
         1\\
         1\\
         0
    \end{array}\right|
    &\left[
    \begin{array}{cccccc}
         0 & 0 & \pi_{02} & \pi_{03} & {\pi_{04}} & \pi_{05}\\
         0 & 0 & \pi_{12} & \pi_{13} & \pi_{14} & \pi_{15}\\
         0 & 0 & {\pi_{22}} & \pi_{23} & \pi_{24} & \pi_{25}\\
         0 & 0 & \pi_{32} & {\pi_{33}} & \pi_{34} & \pi_{35}\\
         0 & 0 & \pi_{42} & \pi_{43} & \pi_{44} & {\pi_{45}}\\
         0 & 0 & 0 & 0 & 0 & 0
    \end{array}
    \right]
\end{align*}
where the additional robot appears as~${\nu_{i1}=0},\forall i$. Following the column-sum constraint, the first two columns are now zeros implying that mass cannot be brought to either cell~$0$ or cell~$1$. The UOT inequality constraints allow for a zero solution, which is avoided by the final constraint that~$\Pi$ must have exactly~${m=\min(5,4)=4}$ ones. Among several possible plans~$\Pi$, two examples are~$\{{0\ra4\ra5}, {2\ra2},{3\ra3}\}$ and~$\{{1\ra5},{2\ra2},{3\ra3},{4\ra4}\}$, where the latter has a cost of~$1$ and is returned by UOT. 

\section{REPLANS with OT}\label{sec_replans}
The OT and UOT formulations discussed up to now assume a \textit{practically feasible} discretization~$\overline{\mc D}(\Omega)$. As a consequence, the linear program implemented by OT is guaranteed to find a min-cost solution, with non-overlapping paths that comprise of at-most one cell transitions. We now consider scenarios where a preferred discretization~$\underline{\mc D}(\Omega)$, not necessarily feasible, is given and the goal is to design robot trajectories within the confines of~$\underline{\mc D}(\Omega)$. For a given~$\underline{\mc D}(\Omega)$, however, implementing trajectories obtained from OT may not be ideal, see Fig.~\ref{fig3}:
\begin{inparaenum}[(i)]
    \item $\underline{\mc D}(\Omega)$ is practically feasible (Def.~\ref{pfdisc}) but results in a longer trajectory;
    \item non-overlapping trajectories with at most one-cell transitions do not exist.
\end{inparaenum}
These are consequences of the fact that the OT plans do not account for timing and scheduling possibilities. We emphasize that OT plans are still meaningful when~${\underline{\mc D}(\Omega)=\overline{\mc D}(\Omega)}$, as they can be implemented without any temporal considerations, as in Fig.~\ref{fig3} (left). 
\begin{figure}[!h]
  \centering
  \includegraphics[width=3in]{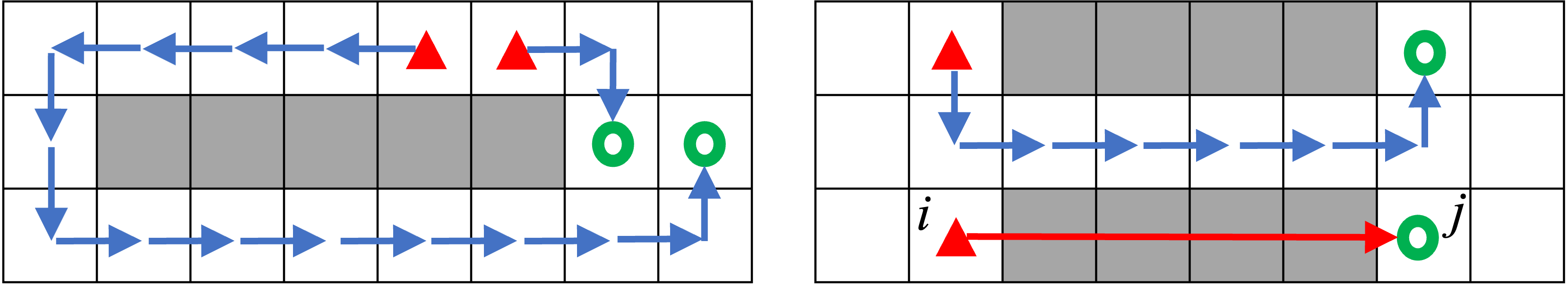}
  \caption{OT trajectories on a given~$\underline{\mc D}(\Omega)$: (Left) Longer trajectory to satisfy constraints; (Right) Non-overlapping one-step trajectories do not exist and OT results in a jump solution with a total cost of~${7+c_{16}},{1\ll{c_{16}<\infty}}$.}
  \label{fig3}
\end{figure}

\textbf{Simple replans: }One approach to address potentially longer (but non-overlapping) OT trajectories is to sort them in an ascending-cost order and execute the shorter paths first (composed by unit cost transitions). The OT problem is then re-solved with a reduced set of robots and targets. Clearly, this approach handles both scenarios in Fig.~\ref{fig3}, without bringing in path conflicts. For example, the shorter path in Fig.~\ref{fig3} (left) is executed first in~$2$ steps and then OT is solved again to find the shortest path for the remaining robot. Another approach is to find a set of overlapping trajectories and bring timing and scheduling considerations to avoid conflicts. We describe this next. 

\subsection{Discretization (revisited)}\label{disc_rev}
Consider the scenario in Fig.~\ref{fig3} again with a given~$\underline{\mc D}(\Omega)$. Let the robots and targets be point masses and assume a finer discretization~$\underline{\underline{\mc D}}(\Omega)$ of~$\underline{\mc D}(\Omega)$, such that~$\underline{\underline{\mc D}}(\Omega)$ is practically feasible, i.e., OT returns lower cost on a higher resolution grid; see Fig.~\ref{fig4}. In other words, the high resolution in~$\underline{\underline{\mc D}}(\Omega)$ is equivalent to increasing the travel capacity of the corresponding cells. Clearly, from the perspective of OT in \textbf{P1} or \textbf{P2}, the solutions in Fig.~\ref{fig4} are feasible as no constraints are violated because the trajectories traverse distinct spaces. These trajectories can be projected back in~$\underline{\mc D}(\Omega)$, where a transport plan is originally sought, however, they cannot be readily implemented and require scheduling considerations with the help of replans as discussed in the next section. The following lemma characterizes the finer discretizations~$\underline{\underline{\mc D}}(\Omega)$.
\begin{figure}[!h]
  \centering
  \includegraphics[width=3in]{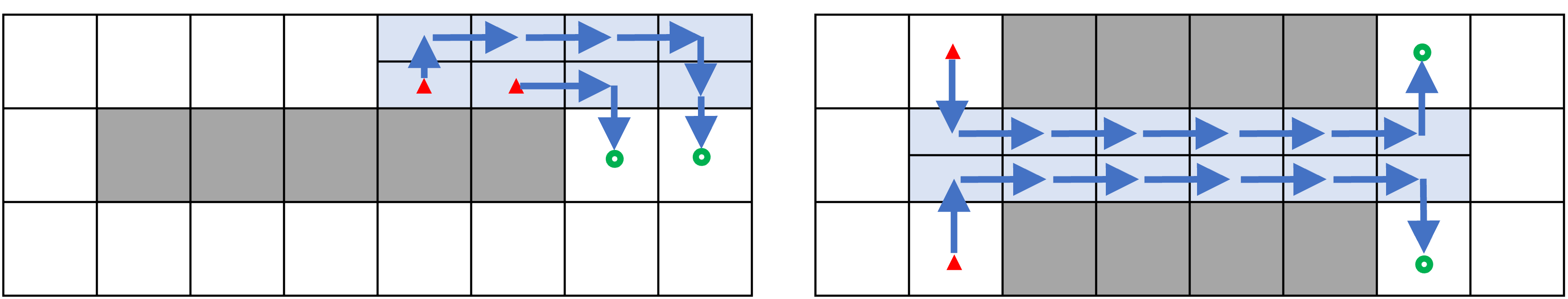}
  \caption{Non-overlapping paths by finely discretizing~$\Omega$.}
  \label{fig4}
\end{figure}



\begin{lem}\label{lem_valid}
    Consider~${N}$ robots and~$M$ targets in~$\Omega$, ${N,M<\infty}$, with a preferred discretization $\underline{\mc D}(\Omega)$. Assume that each robot has a path (in $\Omega$) to each target that does not cross through an obstacle, another robot, or another target.
    Then there exists a finer discretization~$\underline{\underline{\mc D}}(\Omega)$ of~$\underline{\mc D}(\Omega)$, which is practically feasible.
\end{lem}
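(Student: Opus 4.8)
The plan is to prove Lemma~\ref{lem_valid} constructively: exhibit a sequence of successively finer discretizations and show that, past a certain resolution, the resulting grid satisfies all three conditions of Definition~\ref{pfdisc}. The hypothesis gives us, for each robot $n$ and target $m$, a continuous path $\gamma_{nm}$ in $\Omega$ avoiding obstacles and all other robots/targets. Since there are finitely many robots and targets, and finitely many such reference paths, I would first extract a single positive ``clearance'' parameter $\rho>0$: the minimum, over all these paths, of the distance from each path to the obstacle set $\kappa$, to the other robot centroids, and to the other target centroids. Compactness of $\Omega$ (or of each path image) and finiteness of the index sets make this minimum attained and strictly positive by the non-crossing assumption.

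Next I would choose the cell diameter $\delta$ small enough relative to $\rho$ — say $\delta < \rho/c$ for a small universal constant $c$ depending only on the cell geometry (for a uniform square grid, $c$ accounts for the $\sqrt{2}$ diagonal) — and take $\underline{\underline{\mc D}}(\Omega)$ to be the common refinement of $\underline{\mc D}(\Omega)$ with this fine uniform grid, so that it is genuinely a refinement of the preferred discretization. The key geometric claim is then: with $\delta$ this small, the sequence of cells visited along each reference path $\gamma_{nm}$ forms a chain of pairwise-adjacent cells (consecutive cells share an edge, not merely a corner, if we parametrize carefully), none of which is an obstacle cell, and the chains for a valid robot-target matching can be made cell-disjoint because the underlying paths are at mutual distance $\ge\rho>\delta\cdot c$. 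This yields a transport plan $\Pi$ that routes each matched robot along its chain via one-step unit-cost transitions and keeps every other cell's virtual mass in place: condition (iii) of Def.~\ref{pfdisc} holds because every used transition is between adjacent non-obstacle cells (cost $\le 1$), condition (ii) holds by construction of the chains as a feasible routing, and condition (i) follows since the total cost is a finite sum of $1$'s and $0$'s.

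There is one subtlety I would need to address: Def.~\ref{pfdisc} asks only for the \emph{existence} of some well-behaved $\Pi$, not that the OT optimum equals it; but the existence of this chain-routing plan is exactly what is required, and since it has finite cost, the infimum in \textbf{P1}/\textbf{P2} is finite and (by Lemma~\ref{lem1}'s unimodularity argument applied on $\underline{\underline{\mc D}}(\Omega)$) attained at an integral plan with the stated properties. I would also note that shrinking cells can only help: refining a practically feasible grid keeps it practically feasible, and the existence of \emph{one} sufficiently fine practically feasible refinement is all that is claimed.

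I expect the main obstacle to be the discrete-geometry bookkeeping in the key claim — precisely, showing that two continuous paths separated by distance $\rho$ induce cell-disjoint chains once $\delta$ is small, and that each chain is a sequence of edge-adjacent (not just corner-adjacent) cells so that all transitions are legitimate unit-cost moves under the cost model of Section~\ref{sec_ot}. This requires being slightly careful about how a path that grazes a cell boundary is assigned to cells (e.g., perturbing the grid origin generically, or thickening the path into a tube of radius $<\rho/2$ and covering the tube), but it is a routine covering argument once the clearance $\rho$ is in hand; I would handle it by choosing the grid offset generically so that no reference path passes through a grid vertex, which removes the corner-adjacency ambiguity.
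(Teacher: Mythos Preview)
The paper does not actually prove Lemma~\ref{lem_valid}: it states that ``a formal proof is beyond the scope of this paper'' and gives only a one-line sketch, pointing to finer discretizations that ``resolve conflicts'' and to the valid-infrastructure condition of~\cite{Cap2015}. Your proposal is therefore already more detailed than anything in the paper, and the overall architecture (extract a positive clearance, refine the grid below that scale, exhibit an explicit chain-routing plan, verify Definition~\ref{pfdisc}) is the natural one.

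There is, however, a real gap at the step you flag as ``the key geometric claim.'' Your parameter~$\rho$ is, by your own definition, the minimum distance from each reference path~$\gamma_{nm}$ to the obstacle set and to the \emph{other robot and target points}. You then write that ``the chains for a valid robot-target matching can be made cell-disjoint because the underlying paths are at mutual distance $\ge\rho$.'' That inference is unjustified: the hypothesis of the lemma says nothing about the paths avoiding \emph{each other}. Two paths~$\gamma_{n_1 m_1}$ and~$\gamma_{n_2 m_2}$ in a fixed matching may intersect, overlap along a segment, or both thread the same narrow corridor, while each individually keeps clearance~$\rho$ from obstacles and from the other endpoints. In that situation the induced cell chains share cells for every~$\delta>0$, and the plan you construct violates the column-sum constraint in Definition~\ref{pfdisc}(ii). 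Concretely, take two robots and two targets at the four corners of a square free region with the diagonal pairing: the reference paths cross, your~$\rho$ is large, yet no refinement makes those two particular chains disjoint.

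What is missing is exactly the content the paper's sketch and Fig.~\ref{fig4} gesture at: you must use the obstacle-clearance~$\rho$ not to separate the \emph{given} paths, but to manufacture \emph{new} disjoint ones. This has two ingredients you have not supplied. First, one must be free to change the matching (in the square example above, the diagonal pairing admits no disjoint realisation in a simply connected region, but the side pairing does); the anonymous setting allows this, and a min-total-length matching argument shows a non-crossing matching exists. Second, where several paths are forced through a common corridor, the $\rho$-tubes around them guarantee the corridor has width at least~$\rho$, so taking~$\delta$ small relative to~$\rho/L$ lets you lay down~$L$ parallel cell-lanes and route each chain along its own lane. Carrying out these two steps is where the actual work of the lemma lies; your current argument only covers the special case in which the reference paths happen to be pairwise disjoint to begin with.
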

A formal proof is beyond the scope of this paper, but we provide a brief sketch of the argument. If each robot has at least one unblocked path to a target, feasible paths can be accommodated even in narrow regions by using finer discretizations to resolve conflicts. The condition stated in Lemma~\ref{lem_valid} avoids situations where a robot or a target permanently blocks another robot from reaching its goal, for example, by occupying the only traversable corridor. This requirement is related to the concept of \textit{valid infrastructures} introduced in~\cite{Cap2015}, which ensure that agents can maneuver without creating deadlocks. We note that non-uniform or adaptive discretizations, using finer resolution only in tight or congested areas, are also possible; see Fig.~\ref{fig4}. While finer discretizations increase the likelihood of feasible, schedulable paths, they also incur greater computation, both in computing OT and in generating and implementing schedules. However, the OT complexity remains~$\mc O(K^3 \log K)$, provided the number of subdivisions does not scale with~$K$. 

In the next section, we use the set of paths generated by OT and design MPC strategies for each robot to track these paths while respecting the robot dynamics and control costs. The proper integration of OT replans with a receding horizon strategy allows us to extend the OT planning to the time-varying case, e.g., when the  obstacles are dynamic. 

\section{MPC-OT: TRAJECTORY TRACKING}\label{sec_mpc}
Up to now, we have neglected the robots' dynamics in the OT formulations, however, in most real-life applications the path planned by OT will be executed by robots with physical constraints, given by (\ref{ss1}). To include these dynamics in the overall multi-robot system control, we propose to close the loop on these trajectories via decentralized model predictive control (MPC)~\cite{Rossiter2004,Benosman2016}. To this aim, for each robot~${n\in[1,\ldots,N]}$, let~$\mb y_n^d(r,t)$ be the desired trajectory generated by the~$r$th OT replan, obtained by breaking the OT paths, resulting for example from the aforementioned simple replans, into smaller segments. Let~$[t_k,t_{k+1}]$ denotes the time interval horizon of the MPC applied to the system, such that~${t_k =t}$,~${t_{k+1}=t+T_c}$,~${T_{c}<T}$, with $T>0$ denoting the MPC integration horizon. For each robot~$n$, we solve the following contractive MPC problem: 
\begin{equation}\label{MPC-pb}
\begin{array}{l}\min\limits_{\mb u_n }J_{n}:=\int_{t}^{t+T}(\mb y_n(t)-\mb y^{d}_n (r,t))^\top Q_{1}(\mb y_n(t)-\mb y^{d}_n(r,t))dt\\\hspace{+1.5cm}+\int_{t}^{t+T}\mb u_{n}(t)^{\top}Q_{2}\mb u_{n}(t)dt,\end{array}
\end{equation}
under the constraints
\begin{equation}\label{MPC-pb-cts}
\begin{array}{l}
\dot {\mb x}_{n}(t)=f_{n}(\mb x_{n}(t))+g_{n}(\mb x_{n}(t))\cdot\mb u_{n}(t)\\
\mb y_{n}(t)=h_{n} (\mb x_{n}(t))\\
\mb u_{n}^{-}\leq \mb u_{n}(t)\leq \mb u_{n}^{+}\\
\mb x_{n}^{-}\leq \mb x_{n}(t)\leq
\mb x_{n}^{+}\\
||\mb e_{n}(t_{k+1})||_P\leq \alpha_{n}
||\mb e_{n}(t_{k})||_P,~~\alpha_n\in(0,1),
\end{array}
\end{equation}
where~${\mb e_{n}(t)=h_n(\mb x_n(t))-\mb y^{d}_{n}(r,t)}$ is the error process in tracking the trajectory, the matrices~$Q_1,Q_2,P$ are arbitrary, positive definite, and~$||\mb e_{n}(t)||_P=(\mb e_{n}(t)^{\top}P\mb e_{n}(t))^{1/2}$. Note that the first two constraints are the physical dynamics~\eqref{ss1}, the third and fourth constraints enforce that the state and control remains inside a bounded interval, while the last constraint ensures that the error process~$\mb e_n(t)$ is contractive. 

We will now prove boundedness of the tracking errors and exponential convergence to the target locations for all robots, when controlled by the MPC-OT  presented in Algorithm \ref{alg:MPC-OT}. Before we proceed, we provide the assumptions next.

\begin{algorithm}
\caption{MPC-OT PSEUDO-CODE}\label{alg:MPC-OT}
\begin{algorithmic}[1]
\renewcommand{\algorithmicrequire}{\textbf{Input:}}
\REQUIRE Number of robots~$N$,\\ 
~~~~\:Initial robot locations $y_{n}(0)$, ${n=1,\ldots,N}$,\\ 
~~~~\:MPC cost horizon~$T$,\\
~~~~\:MPC control horizon~$T_c$,\\
~~~~\:Target robot locations~$y^{target}_{n}(T)$, $n=1,\ldots,N$,\\ 
~~~~\:OT cost matrix $C$,\\
~~~~\:MPC time-step discretization~$\delta t$\;
\STATE Start OT loop: encode $y_{i}(0)$'s in~$\bnu$, and $y^{target}_{i}$'s in~$\bmu$
\STATE Solve the OT problem $\bf{P1}$ or $\bf{P2}$ with cost~$C$\;
\STATE Start the MPC loop: generate a trajectory from the discrete OT optimal path (any time-interpolation can be used here)\; 
\STATE Solve the MPC problem (\ref{MPC-pb}) over horizon~$T$\;
\STATE Apply the control inputs $u_{n}$,$n=1,\ldots,N$ for the time interval~$[t,t+T_{c}]$\;
\STATE Reset $y_{i}(0)$'s to $y_{i}(t+T_{c})$'s, and go to Step 1: (OT replanning)\;
\end{algorithmic}
\end{algorithm}

\begin{assumption}\label{assump1}
For any ${n\in [1,\ldots,N]}$, the system in (\ref{ss1}) is controllable along its OT target output trajectories.
\end{assumption}
\begin{assumption}\label{assump2}
For any ${n\in [1,\ldots,N]}$ and any $r\in \{1,2,\ldots\}$, there exists a
$\rho_{nr}\in\:]0,\infty[$ such that for all
${\mb x_{t_{0}}\in\mb{B}_{\rho_{nr}}}\stackrel{\triangle}{=}\{{\mb x_{t_{0}}\in\mathbb{R}^{n}}|\;||\mb e_{n}(\mb x_{t_{0}})||_{P}
\leq\rho_{nr}\}$, the MPC problem (\ref{MPC-pb}), (\ref{MPC-pb-cts}) has a
solution. 
\end{assumption}

\begin{assumption}\label{assump3}
For any ${n\in [1,\ldots,N]}$ and any $r\in \{1,2,\ldots\}$, there exist a constant ${\beta_{nr}\in\:]0,\infty[}$, such that
$||\mb e_{n}(t)||_{P}\leq\beta_{nr}||\mb e_{n}(t_{k})||_{P},\;\forall
t\in[t_{k},\;t_{k+1}],\;k=0,1,\dots$.
\end{assumption}

\begin{assumption}\label{assump4}
For any ${n\in [1,\ldots,N]}$, and any $r\in \{1,2,\ldots\}$, there exists a $(t^{*},r^{*})$, s.t.,~$\forall {t>t^{*}},\;{r>r^{*}}$ the OT generates the same optimal trajectories $\mb y^{d^{*}}_{n}(r^{*},.)$, i.e., the OT replanning leads to the same target trajectories. 
\end{assumption}

Assumptions~\ref{assump1} and~\ref{assump2} are classical in MPC, allowing us to ensure the feasibility of the MPC problem. The third assumption on the solutions' boundedness is straightforward from the box constraints on the inputs and the continuity of the system’s model. The fourth assumption is needed to allow the system to \textit{settle} on a final optimal trajectory to track. This assumption is needed to prove exponential stability of the error dynamics w.r.t. to the final optimal trajectory.

\begin{thm}\label{th1}
Under Assumptions \ref{assump1}--\ref{assump4}, the MPC-OT Algorithm~\ref{alg:MPC-OT} applied to the multi-robot system (\ref{ss1}), leads to bounded tracking errors $\mb e_{n}(t)$ and exponential convergence to the target locations $\mb y^{target}_{n}(T),\forall n=[1,\ldots,N]$.
\end{thm}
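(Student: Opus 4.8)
\emph{Proof proposal.} The plan is to reduce the statement to a single robot~$n$ --- the MPC subproblems~\eqref{MPC-pb}--\eqref{MPC-pb-cts} are decoupled across robots --- and, for that robot, to (a)~establish geometric decay of the tracking error \emph{within} one OT replan epoch, (b)~patch together the finitely many epochs that precede the settling time of Assumption~\ref{assump4}, and (c)~invoke Lemma~\ref{lem1} to turn convergence-to-reference into convergence-to-target.

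\textbf{Contraction within an epoch.} First I would fix a replan index~$r$ and assume the MPC loop is (re)initialized at a sampling instant~$t_0$ with~$\mb x_n(t_0)\in\mb B_{\rho_{nr}}$, i.e.,~$\|\mb e_n(t_0)\|_P\leq\rho_{nr}$. Assumption~\ref{assump2} then makes~\eqref{MPC-pb}--\eqref{MPC-pb-cts} feasible, and its optimizer satisfies the hard contraction constraint~$\|\mb e_n(t_{k+1})\|_P\leq\alpha_n\|\mb e_n(t_k)\|_P$ with~$\alpha_n\in(0,1)$. Iterating gives~$\|\mb e_n(t_k)\|_P\leq\alpha_n^{k}\|\mb e_n(t_0)\|_P\leq\rho_{nr}$, which does double duty: it keeps every later initialization inside~$\mb B_{\rho_{nr}}$ (so the loop is recursively feasible on that epoch) and it drives the sampled error to~$\mathbf 0$ geometrically. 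Assumption~\ref{assump3} lifts this to continuous time --- for~$t\in[t_k,t_{k+1}]$, $\|\mb e_n(t)\|_P\leq\beta_{nr}\alpha_n^{k}\|\mb e_n(t_0)\|_P$ --- and since~$t_{k+1}-t_k=T_c$ this reads~$\|\mb e_n(t)\|_P\leq C_{nr}\,\mathrm e^{-\lambda_n(t-t_0)}\|\mb e_n(t_0)\|_P$ for a suitable constant~$C_{nr}$ and~$\lambda_n:=-\ln(\alpha_n)/T_c>0$: exponential decay while the reference~$\mb y_n^d(r,\cdot)$ is held fixed.

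\textbf{Patching the transient and concluding.} Next I would use Assumption~\ref{assump4}, which supplies a pair~$(t^{*},r^{*})$ past which every replan returns the same reference~$\mb y_n^{d^{*}}(r^{*},\cdot)$; thus only finitely many references appear on~$[0,t^{*}]$. Across replan boundaries, Step~6 of Algorithm~\ref{alg:MPC-OT} re-encodes the OT problem from the current location~$\mb y_n(t+T_c)$, so each new epoch begins with its state in the corresponding~$\mb B_{\rho_{nr}}$ and the per-epoch analysis above applies verbatim; taking maxima of the finitely many constants~$C_{nr}$ and of the epoch-initial errors bounds~$\sup_{t\in[0,t^{*}]}\|\mb e_n(t)\|_P$. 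Since~$P\succ0$, $\|\cdot\|_P$ is equivalent to the Euclidean norm, so~$\mb e_n$ is bounded on~$[0,t^{*}]$ and, by the epoch argument, for all~$t$. For~$t>t^{*}$ the reference is frozen at~$\mb y_n^{d^{*}}(r^{*},\cdot)$, which --- being an OT/UOT path obtained from \textbf{P1}/\textbf{P2} on a practically feasible discretization --- ends at a distinct target by Lemma~\ref{lem1}(iii), i.e.,~$\mb y_n^{d^{*}}(r^{*},t)\ra\mb y_n^{target}(T)$. Re-running the contraction step with~$r=r^{*}$ from the initial time~$t^{*}$ yields~$\|\mb e_n(t)\|_P\leq C_{nr^{*}}\,\mathrm e^{-\lambda_n(t-t^{*})}\|\mb e_n(t^{*})\|_P$, hence~$\mb e_n(t)\ra\mathbf0$ exponentially and therefore~$\mb y_n(t)=h_n(\mb x_n(t))\ra\mb y_n^{target}(T)$ exponentially. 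Uniformity of the argument in~$n$ finishes the proof.

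\textbf{Main obstacle.} I expect the delicate point to be the patching in the second step: ruling out accumulation of the tracking error across the finitely many transient replan epochs and maintaining recursive feasibility across replan boundaries. This hinges on the re-anchoring performed in Step~6 of Algorithm~\ref{alg:MPC-OT} together with the finite-transient guarantee of Assumption~\ref{assump4}; absent the re-anchoring one would instead need an explicit Lipschitz/continuity bound on the jump of the reference trajectory at each replan. Converting the discrete geometric decay in~$k$ into continuous-time exponential decay in~$t$ also requires the inter-sample bound of Assumption~\ref{assump3}, but this is routine once the contraction is in place.
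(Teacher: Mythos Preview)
The paper does not actually prove Theorem~\ref{th1}: immediately after the statement it says ``The proof of Theorem~\ref{th1} will be provided elsewhere'' and offers only a one-sentence summary --- bounded tracking error during the OT replan iterations, followed by exponential convergence once the reference settles on the final OT trajectory. Your proposal is entirely consistent with that summary and fleshes it out along the natural lines: per-epoch geometric decay from the hard contraction constraint in~\eqref{MPC-pb-cts}, inter-sample control via Assumption~\ref{assump3}, a finite transient via Assumption~\ref{assump4}, and Lemma~\ref{lem1}(iii) to identify the terminal reference with a target. Since there is no proof in the paper to compare against, one cannot say whether your decomposition matches the authors' intended argument in detail, but nothing in your outline conflicts with their sketch.

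The one point you correctly flag as delicate --- recursive feasibility across replan boundaries --- is indeed not resolved by your argument as written. Re-encoding the OT problem from the current position~$\mb y_n(t+T_c)$ in Step~6 does \emph{not} by itself guarantee that the new error~$\mb e_n$ (which is measured against a \emph{new} reference~$\mb y_n^d(r+1,\cdot)$) lies in~$\mb B_{\rho_{n,r+1}}$; a jump in the reference could enlarge~$\|\mb e_n\|_P$ beyond the feasibility radius of Assumption~\ref{assump2}. Closing this gap would require either (i)~a quantitative bound on how far consecutive OT references can differ at the handover time, or (ii)~an assumption that the~$\rho_{nr}$ are large enough to absorb such jumps. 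The paper provides neither, and since the full proof is deferred, it is plausible the authors handle this with an additional hypothesis or a continuity argument not stated here.
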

The proof of Theorem~\ref{th1} will be provided elsewhere. In short, it shows that the robots track, within a bounded error, the trajectories generated by the iterations of the OT planner, and eventually converge exponentially to the final OT trajectory leading them to the desired targets.

\section{Simulations}\label{sec_sims}
In this section, we evaluate the effectiveness of the proposed MPC-OT algorithm with the help of numerical simulations. We consider a region~$\Omega$ discretized into a $75 \times 50$ grid, yielding ${K=3,\!750}$ cells; see Fig.~\ref{fig:mpc}. A team of ${N=20}$ unicycle robots is deployed in this environment, each governed by the following dynamics:
\begin{equation*}
    \begin{aligned}
        \dot p_{x,n} = v_n\cos{\theta_n}, ~~\dot p_{y,n} = v_n\sin{\theta_n}, ~~\dot \theta_n = \omega_n,
    \end{aligned}
\end{equation*}
where $p_{x,n}$ and $p_{y,n}$ denote the position of robot $n$, $\theta_n$ is its orientation, and $v_n$, $\omega_n$ are its forward and angular velocities. The state, control input, and spatial location of each robot are denoted by $\mathbf{x}_n = [p_{x,n}, p_{y,n}, \theta_n]$, $\mathbf{u}_n = [v_n, \omega_n]$, and $\mathbf{y}_n = [p_{x,n}, p_{y,n}]$, respectively. We set the number of targets equal to the number of robots (${M = N}$) and randomly generate obstacles with varying shapes, sizes, and locations. Robots and targets are randomly placed in the obstacle-free space and are marked as red triangles and blue circles, respectively. The optimal transport problem~{\bf P1} is then solved to compute minimum-cost, non-overlapping trajectories that assign each robot to a corresponding target.
\begin{figure}[!h]
    \centering
    \includegraphics[width=2in]{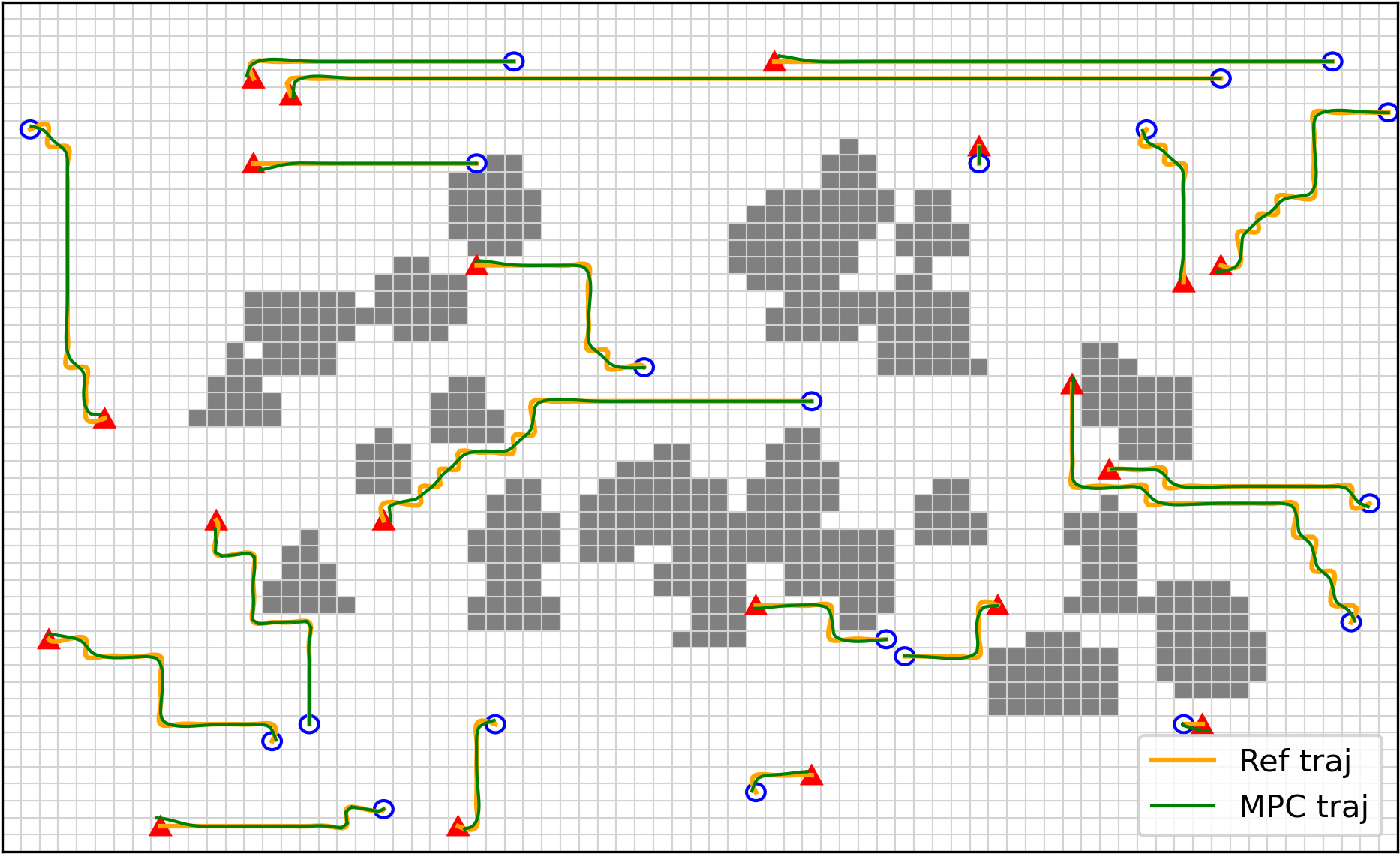}
    \caption{Robot trajectories tracking OT-generated paths: Red triangles indicate initial robot positions; blue circles denote target locations; grey cells represent obstacles. The orange curves are interpolated reference trajectories, while the green curves are the actual MPC-tracked paths.}
    \label{fig:mpc}
\end{figure}

To obtain reference trajectories for model predictive control (MPC), we apply cubic time interpolation to the discrete OT-generated paths, assuming a fixed transition time of $1$ second between neighboring cells. These continuous reference trajectories are tracked using the MPC strategy described in Sec.~\ref{sec_mpc}. Each robot's orientation in its starting cell is aligned with the direction of its motion. As shown in Fig.~\ref{fig:mpc}, all robots successfully follow their assigned OT paths and reach their respective targets. In dynamic environments, the set of occupied cells may change over time due to moving obstacles or new sensor information, necessitating online replans. To simulate such scenarios, we relocate one obstacle during MPC execution, that is effectively captured by the OT replans. As illustrated in Fig.~\ref{fig:replan}, the robot’s reference trajectory is updated in response to the displaced obstacle, and the MPC controller successfully adapts to guide the robot to its target without collision.
\begin{figure}[!h]
    \centering
    \includegraphics[width=2in]{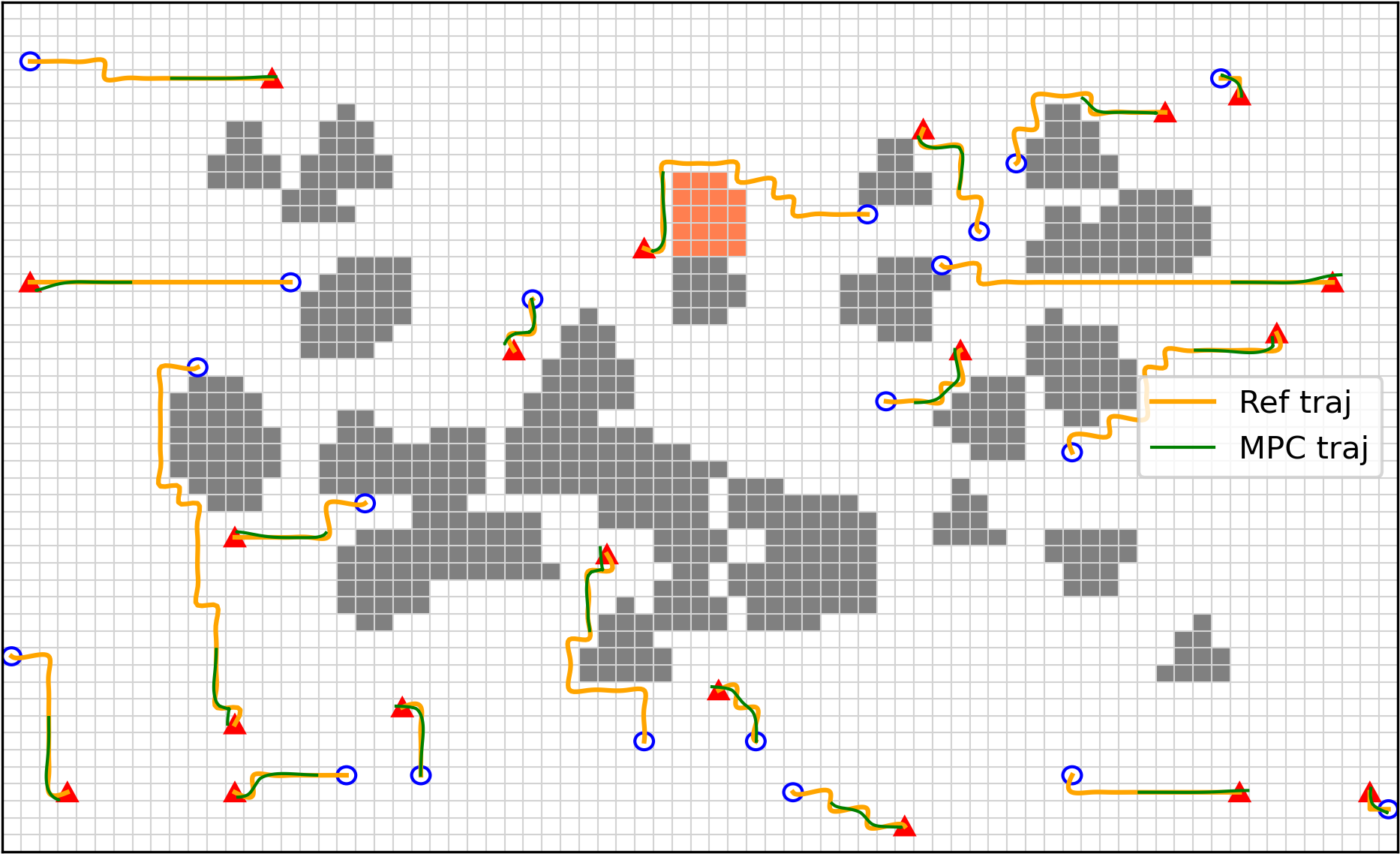}  \hspace{0.1cm}
    \includegraphics[width=2in]{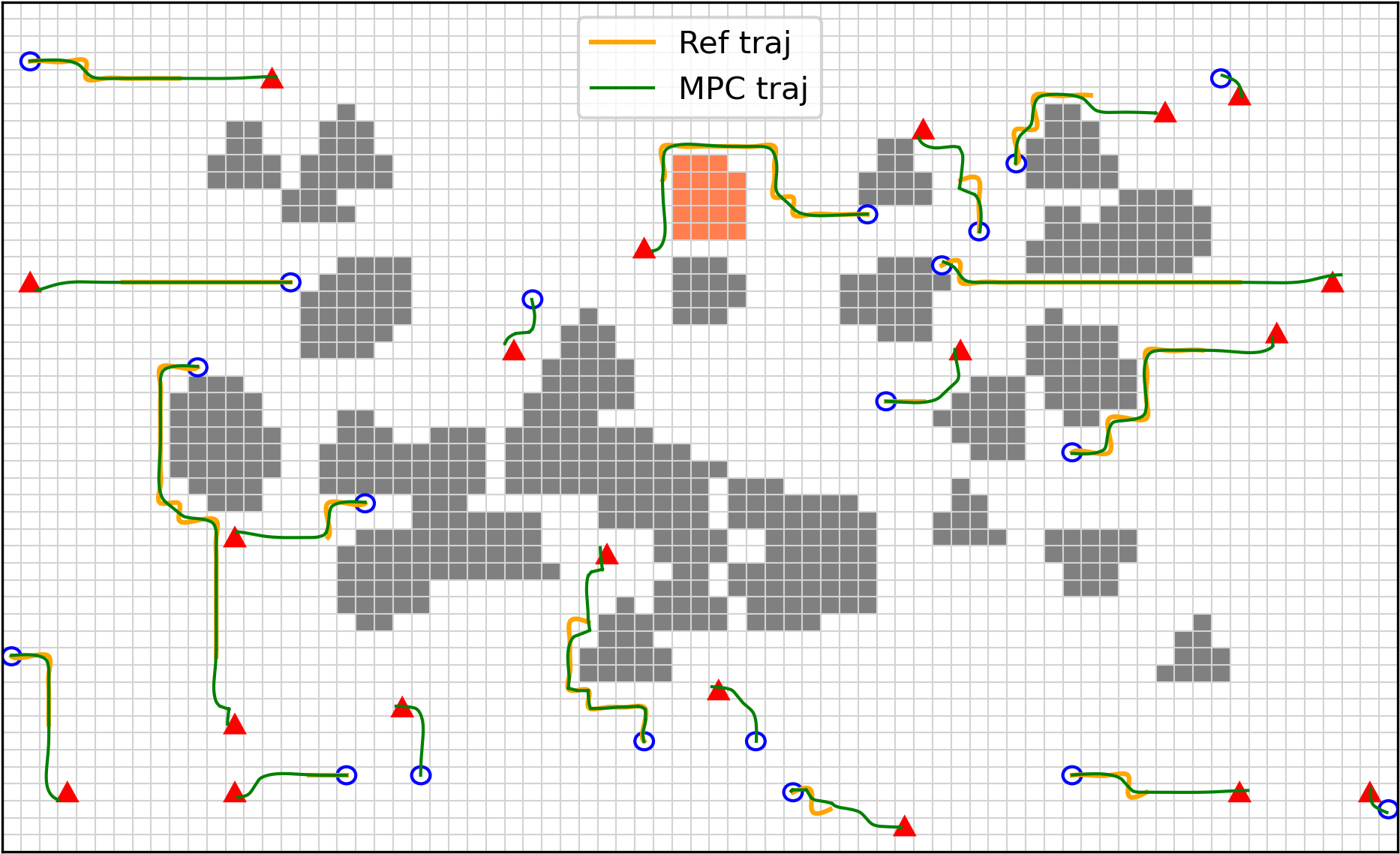}
    \caption{OT replans capturing a moved obstacle (shown in orange): (Left) before the obstacle is moved. (Right) after relocation and trajectory replan.}
    \label{fig:replan}
\end{figure}

\vspace{-0.2cm}
\section{Conclusions}\label{sec_conc}
In this paper, we address the problem of multi-robot target assignment and path planning using optimal transport (OT). Rather than directly matching individual robots to targets, we formulate the task as a mass reconfiguration problem, transforming the entire space~$\Omega$ from an initial to a final distribution. We develop conditions under which cost-optimal, non-overlapping paths exist, and introduce modifications to handle cases where such paths cannot be realized. We incorporate temporal considerations through periodic replanning and receding horizon MPC controllers. The proposed methodologies offer a novel framework for multi-agent path finding (MAPF) combined with optimal transport, laying the groundwork for incorporating control-based constraints, modulating the spatial structure and distribution of paths, and leveraging modern advances in optimal transport.

\bibliography{Refs}

\begin{thebibliography}{26}
\providecommand{\natexlab}[1]{#1}
\providecommand{\url}[1]{\texttt{#1}}
\expandafter\ifx\csname urlstyle\endcsname\relax
  \providecommand{\doi}[1]{doi: #1}\else
  \providecommand{\doi}{doi: \begingroup \urlstyle{rm}\Url}\fi

\bibitem[{Agaskar et al.}(2025)]{DeepFleet}
Ameya {Agaskar et al.}
\newblock {DeepFleet}: {M}ulti-agent foundation models for mobile robots.
\newblock \emph{arXiv:2508.08574}, 2025.

\bibitem[Stern(2019)]{Stern2019}
Roni Stern.
\newblock Multi-agent path finding – an overview.
\newblock In Gennady~S. Osipov, Aleksandr~I. Panov, and Konstantin~S. Yakovlev, editors, \emph{Artificial Intelligence}, volume 11866 of \emph{Lecture Notes in Computer Science}, pages 96--115. Springer, 2019.

\bibitem[Ali and Yakovlev(2023)]{Ali2023}
Zain~Alabedeen Ali and Konstantin Yakovlev.
\newblock Improved anonymous multi-agent path finding algorithm.
\newblock In \emph{Proceedings of the AAAI Conference on Artificial Intelligence}, volume~37, pages 14872--14878, 2023.

\bibitem[Villani(2003)]{Villani2003}
Cédric Villani.
\newblock \emph{Topics in Optimal Transportation}, volume~58 of \emph{Graduate Studies in Mathematics}.
\newblock American Mathematical Society, 2003.

\bibitem[Peyré and Cuturi(2019)]{Peyre2019}
Gabriel Peyré and Marco Cuturi.
\newblock Computational optimal transport: {With} applications to data science.
\newblock \emph{Foundations and Trends in Machine Learning}, 11\penalty0 (5-6):\penalty0 355--607, 2019.

\bibitem[Figalli(2022)]{Figalli2022}
Alessio Figalli.
\newblock An introduction to optimal transport and {Wasserstein} gradient flows.
\newblock \emph{Lecture Notes from the School "Optimal Transport on Quantum Structures"}, 2022.

\bibitem[Bandyopadhyay et~al.(2014)Bandyopadhyay, Chung, and Hadaegh]{Bandyopadhyay2014}
Saptarshi Bandyopadhyay, Soon-Jo Chung, and Fred~Y. Hadaegh.
\newblock Probabilistic swarm guidance using optimal transport.
\newblock In \emph{Proceedings of the IEEE Conference on Control Applications}, pages 498--505, 2014.

\bibitem[de~Badyn et~al.(2018)de~Badyn, Eren, Açikmeşe, and Mesbahi]{deBadyn2018}
Mathias~Hudoba de~Badyn, Utku Eren, Behçet Açikmeşe, and Mehran Mesbahi.
\newblock Optimal mass transport and kernel density estimation for state-dependent networked dynamic systems.
\newblock In \emph{Proceedings of the 2018 IEEE Conference on Decision and Control}, pages 1225--1230, 2018.

\bibitem[Krishnan and Mart\'inez(2018)]{Krishnan2018}
Vishaal Krishnan and Sonia Mart\'inez.
\newblock Distributed optimal transport for the deployment of swarms.
\newblock In \emph{Proceedings of the 2018 IEEE Conference on Decision and Control (CDC)}, pages 4583--4588, 2018.

\bibitem[Frederick et~al.(2022)Frederick, Egerstedt, and Zhou]{Frederick2022}
Christina Frederick, Magnus Egerstedt, and Haomin Zhou.
\newblock Collective motion planning for a group of robots using intermittent diffusion.
\newblock \emph{Journal of Scientific Computing}, 90\penalty0 (1):\penalty0 1--20, 2022.

\bibitem[Kachar and Gorodetsky(2022)]{kachar2022dynamic}
Koray~G. Kachar and Alex~A. Gorodetsky.
\newblock Dynamic multiagent assignment via discrete optimal transport.
\newblock \emph{IEEE Transactions on Control of Network Systems}, 9\penalty0 (1):\penalty0 151--162, Mar. 2022.

\bibitem[Emerick and Bamieh(2023)]{Emerick2023}
Max Emerick and Bassam Bamieh.
\newblock Continuum swarm tracking control: {A} geometric perspective in {Wasserstein} space.
\newblock In \emph{Proceedings of the 62nd IEEE Conference on Decision and Control}, pages 1367--1374, 2023.

\bibitem[Le et~al.(2023)Le, Chalvatzaki, Biess, and Peters]{Le2023}
An~T. Le, Georgia Chalvatzaki, Armin Biess, and Jan Peters.
\newblock Accelerating motion planning via optimal transport.
\newblock In \emph{Advances in Neural Information Processing Systems}, 2023.

\bibitem[Liang et~al.(2024)Liang, Christopher, Koenig, and Fioretto]{Liang2024}
Jinhao Liang, Jacob~K. Christopher, Sven Koenig, and Ferdinando Fioretto.
\newblock Multi-agent path finding in continuous spaces with projected diffusion models.
\newblock \emph{arXiv:2412.17993}, 2024.

\bibitem[Ma and Koenig(2016)]{Ma2016}
Hang Ma and Sven Koenig.
\newblock Optimal target assignment and path finding for teams of agents.
\newblock In \emph{Proceedings of the International Conference on Autonomous Agents and Multiagent Systems}, 2016.

\bibitem[Andreychuk et~al.(2022)Andreychuk, Yakovlev, and Surynek]{Andreychuk2022}
Anton Andreychuk, Konstantin Yakovlev, and Pavel Surynek.
\newblock Non-crossing anonymous {MAPF} for tethered robots.
\newblock \emph{Journal of Artificial Intelligence Research}, 2022.

\bibitem[Andreychuk et~al.(2023)Andreychuk, Yakovlev, Surynek, and Stern]{Andreychuk2023}
Anton Andreychuk, Konstantin Yakovlev, Pavel Surynek, and Roni Stern.
\newblock Improved anonymous multi-agent path finding algorithm.
\newblock In \emph{Proceedings of the AAAI Conference on Artificial Intelligence}, 2023.

\bibitem[Fine et~al.(2023)Fine, Atzmon, and Agmon]{Fine2023}
Gilad Fine, Dor Atzmon, and Noa Agmon.
\newblock Anonymous multi-agent path finding with individual deadlines.
\newblock In \emph{Proceedings of the 22nd Intl. Conference on Autonomous Agents and Multiagent Systems}, 2023.

\bibitem[Rossiter(2004)]{Rossiter2004}
J.A. Rossiter.
\newblock \emph{Model-Based Predictive Control: {A} Practical Approach}.
\newblock CRC Press, Boca Raton, 1st edition edition, 2004.

\bibitem[Benosman(2016)]{Benosman2016}
Mouhacine Benosman.
\newblock \emph{Learning-Based Adaptive Control: An Extremum Seeking Approach}.
\newblock Butterworth-Heinemann, 2016.

\bibitem[Shi(2025)]{arxiv_K_SHI}
Kaiwen Shi.
\newblock Time-parameterized {OT}.
\newblock \emph{arXiv:2502.10607v1}, 2025.

\bibitem[Schrijver(1998)]{schrijver1998}
Alexander Schrijver.
\newblock \emph{Theory of Linear and Integer Programming}.
\newblock John Wiley \& Sons, Chichester, UK, 1998.
\newblock ISBN 978-0-471-98232-6.

\bibitem[Veinott and Dantzig(1968)]{Dantzig1968}
Jr. Veinott, A.~F. and George~B. Dantzig.
\newblock Integral extreme points.
\newblock \emph{SIAM Review}, 10\penalty0 (3):\penalty0 371--372, 1968.
\newblock \doi{10.1137/1010099}.

\bibitem[Chizat et~al.(2018)Chizat, Peyré, Schmitzer, and Vialard]{Chizat2018}
Lenaïc Chizat, Gabriel Peyré, Bernhard Schmitzer, and François-Xavier Vialard.
\newblock Scaling algorithms for unbalanced optimal transport problems.
\newblock \emph{Mathematics of Computation}, 87\penalty0 (314):\penalty0 2563--2609, 2018.

\bibitem[Chapel et~al.(2020)Chapel, Alaya, and Gasso]{Chapel2020PartialOT}
Laetitia Chapel, Mokhtar~Z. Alaya, and Gilles Gasso.
\newblock Partial optimal transport with applications on positive-unlabeled learning.
\newblock In \emph{Advances in Neural Information Processing Systems}, 2020.

\bibitem[C{\'a}p et~al.(2015)C{\'a}p, Vokr{\'i}nek, and Kleiner]{Cap2015}
Michal C{\'a}p, Ji{\v{r}}{\'i} Vokr{\'i}nek, and Alexander Kleiner.
\newblock Complete decentralized method for on-line multi-robot trajectory planning in well-formed infrastructures.
\newblock In \emph{Proceedings of the International Conference on Automated Planning and Scheduling}, pages 324--332, 2015.

\end{thebibliography}

\end{document}